\newtheorem{theorem}{Theorem}
\newtheorem{lemma}[theorem]{Lemma}
\def\Algref#1{Algorithm~\ref{#1}}
\def\1{\bm{1}}
\def\vp{{\bm{p}}}
\def\vw{{\bm{w}}}
\def\vx{{\bm{x}}}
\DeclareMathAlphabet{\mathsfit}{\encodingdefault}{\sfdefault}{m}{sl}
\SetMathAlphabet{\mathsfit}{bold}{\encodingdefault}{\sfdefault}{bx}{n}
\def\gE{{\mathcal{E}}}
\def\gG{{\mathcal{G}}}
\def\gI{{\mathcal{I}}}
\def\gL{{\mathcal{L}}}
\def\gO{{\mathcal{O}}}
\def\gR{{\mathcal{R}}}
\def\gS{{\mathcal{S}}}
\def\gV{{\mathcal{V}}}
\def\gX{{\mathcal{X}}}
\def\gY{{\mathcal{Y}}}
\def\sC{{\mathbb{C}}}
\def\sD{{\mathbb{D}}}
\def\sM{{\mathbb{M}}}
\def\sN{{\mathbb{N}}}
\def\sS{{\mathbb{S}}}
\newcommand{\E}{\mathbb{E}}
\def\BibTeX{{\rm B\kern-.05em{\sc i\kern-.025em b}\kern-.08em
    T\kern-.1667em\lower.7ex\hbox{E}\kern-.125emX}}
\begin{document}

\title{Graph-Assisted Communication-Efficient \\Ensemble Federated Learning}
	\author{
        \IEEEauthorblockN{Pouya M.~Ghari and Yanning Shen}
        \IEEEauthorblockA{\textit{Department of Electrical Engineering and Computer Science} \\
        \textit{University of California, Irvine}\\
        Irvine, CA, USA \\
 E-mails: pmollaeb@uci.edu, yannings@uci.edu
 }
	}

\maketitle

\begin{abstract}
Communication efficiency arises as a necessity in federated learning due to limited communication bandwidth. To this end, the present paper develops an algorithmic framework where an ensemble of pre-trained models is learned. At each learning round, the server selects a subset of pre-trained models to construct the ensemble model based on the structure of a graph, which characterizes the server's confidence in the models. Then only the selected models are transmitted to the clients, such that certain budget constraints are not violated. Upon receiving updates from the clients, the server refines the structure of the graph accordingly. The proposed algorithm is proved to enjoy sub-linear regret bound. Experiments on real datasets demonstrate the effectiveness of our novel approach.
\end{abstract}

\begin{IEEEkeywords}
federated learning, ensemble learning, graphs
\end{IEEEkeywords}

\section{Introduction}
Prevalence of distributed networks consisting of devices such as mobile phones and sensors with growing computational and storage capability enables pushing more network computations to the edge. \emph{Federated learning} has emerged as a promising framework to train machine learning models under orchestration of a central server while training data remains distributed among the edge devices which are called \emph{clients} \cite{McMahan2017}. 
In federated learning, a central server sends the current model to a set of clients at each learning round. Participating clients then compute updates of the current model based on their local data and send these updates to the server instead of their local data. The server then update the model. This procedure continues until convergence. In this context, communication efficiency is of utmost importance. To this end,  clients-to-server
communication efficiency in federated learning  has been  studied extensively in the context of model updates compression, see e.g., \cite{konency2017,Rothchild2020}.
On the other hand, server-to-clients communication bottleneck arises if the learning task involves large model, such that the required bandwidth to transmit the model exceeds the available bandwidth for server-to-clients communication. For example, base stations can be employed as an aggregating server in certain applications \cite{Abad2019}. Often times the base station may only assign a limited portion of available bandwidth for server-to-client communication, while reserving most of the available bandwidth for other service required by users. Meanwhile, large models trained at the central server may exceed the clients' memory capacity. These challenges all motivate the study of server-to-clients communication-efficient federated learning, which is also the focus of the present work.

Ensemble learning methods are known to be effective for learning large-scale models, which combine several base predictors or experts to generate more accurate ensemble model. However, conventional ensemble learning methods (see e.g. \cite{Buhlmann2012}) are not directly applicable for communication efficient federated learning. To adapt ensemble learning to federated setting, FedBoost has been proposed by \cite{Hamer2020}, where the server constructs a model by combining a subset of pre-trained models. In this way, the server only needs to transmit a subset of pre-trained models to the clients at each learning round. Pre-trained models can be trained on publicly available data without observing clients' data. FedBoost imposes a budget constraint on the number of pre-trained models that can be transmitted to the clients; however it cannot guarantees that  the budget constraint is not violated at each learning round. Instead, it only guarantees the expected cost of model transmission satisfies the budget constraint. Moreover, ensemble learning techniques has been employed in vertical federated learning \cite{Chen2021}. 

The present paper studies server-to-clients communication efficiency in federated learning. Specifically, we aim at selecting a subset of pre-trained models to construct an ensemble model. To this end, each pre-trained model is viewed as an expert lying on a graph. At each learning round, the server chooses a subset of pre-trained models based on the structure of the graph. Upon receiving the updates from the clients, the server refines the structure of the graph. In this context, the prediction provided by each pre-trained model can be viewed as feedback given by the expert. Hence, the constructed graph is named \emph{feedback graph}. The cost of each pre-trained model is  proportional to its parameter size and a budget constraint is set to  transmit the models in order to construct the ensemble model. We develop an algorithm called \textbf{e}nsemble \textbf{f}ederated \textbf{l}earning with \textbf{f}eedback \textbf{g}raph (EFL-FG) which selects a subset of pre-trained models to be transmitted to clients, and guarantees the resulting communication cost does not exceed budget constraint at each learning round.  EFL-FG is proved to enjoy sub-linear regret. Experiments on  real datasets showcase the effectiveness of our proposed algorithms compared with state-of-art ensemble federated learning alternatives.


\section{Problem Statement and Preliminaries}
This section introduces the problem of federated learning with ensemble method. Let there are a set of $N$ clients that the server interacts with them to perform a learning task. Moreover, $\gX$ and $\gY$ denote the input space and output space, respectively such that a data sample $(\vx,y) \in \gX \times \gY$. Furthermore, there are a set of $K$ pre-trained models $f_1(\cdot),\ldots,f_K(\cdot)$ at the server. Each model $f_k(\cdot): \gX \rightarrow \gY$, $\forall k \in [K]$ is a mapping from the input space to the output space where $[K]$ denotes the set $\{1,\ldots,K\}$. At each round of learning, the server uniformly chooses a random subset of clients to send them the current model. The number of clients chosen by the server depends on the available bandwidth for clients-to-server communication. Specifically, at each learning round, the server constructs a model using pre-trained models and send it to the chosen subset of clients. Then, upon receiving new data samples, clients perform the learning task using the current model received from  the server. 

Let $\sC_t$ denote the set of clients selected by the server at learning round $t$, and $\gS_t := \{(\vx_{i,t},y_{i,t}), \forall i \in \sC_t \}$ represents a set of data samples observed by the chosen subset of clients at learning round $t$. In this case, the problem of federated learning can be viewed as a function approximation problem. Specifically, given data samples $\{\gS_t\}_{t=1}^T$, the goal is to find the function $\hat{f}(\cdot)$, such that the difference between $\hat{f}(\vx_{i,t})$ and $y_{i,t}$ is minimized. When the $i$-th client performs the learning task on the received datum $\vx_{i,t}$, it incurs the loss $\gL(\hat{f}(\vx_{i,t}),y_{i,t})$ where $\gL(\cdot,\cdot)$ denotes the loss function.  And $\gL(f_k(\vx_{i,t}),y_{i,t})$ denote the loss of each model $f_k(\cdot)$, $\forall k \in [K]$ for the datum $\vx_{i,t}$ associated with the $i$-th client. In this context, the goal of the server is to construct the function $\hat{f}(\cdot)$ using pre-trained models in a way that the cumulative loss is minimized. In order to build the function $\hat{f}(\cdot)$ using the pre-trained models, the server employs the ensemble method. Using the conventional ensemble method, the model at the server is constructed as
\begin{align}
    \hat{f}(\vx) = \sum_{k=1}^{K}{w_k f_k(\vx)}, \sum_{k=1}^{K}{w_{i}} = 1. \label{eq:1}
\end{align}
The ensemble method in \eqref{eq:1} requires that the server sends all models to the clients. However, this may not be feasible due to e.g., insufficient bandwidth for server-to-clients communication, and limited computational capability and memory of clients to store all models. The present paper proposes a novel algorithmic framework to choose a data-driven subset of models in a way that these limitations are taken into account.

\section{Ensemble Federated Learning with Graphs} \label{sec:EFL-FG}
The present section first introduces a disciplined way to construct a graph based on the performance of pre-trained models. Then, a novel algorithm is proposed to construct an ensemble model employing a subset of pre-trained models chosen by the server based on the graph.

Let $c_k$ be the cost incurred when the $k$-th pre-trained model is transmitted. Specifically, $c_k$ can be the bandwidth required for transmission of the $k$-th model to clients. Let $B_t$ denote the budget of the server which denotes the cumulative cost the server can afford for transmission at learning round $t$, e.g., available bandwidth for server-to-clients communication. In what follows, a principled algorithm to construct a graph is proposed which assists the learner to obtain an ensemble model.
\subsection{Feedback Graph Generation}
Let $\gG_t=(\gV,\gE_t)$ be a directed graph at learning round $t$ with a set of vertices $\gV$ and a set of edges $\gE_t$. Each vertex $v_k \in \gV$, $\forall k \in [K]$ represents the pre-trained model $f_k(\cdot)$. 
Let $\vw_t$ be a weight vector, where the $k$-th element $w_{k,t}$ is the weight associated with the $k$-th model $f_k(\cdot)$, indicating the server's confidence about the performance of model $f_k(\cdot)$. At each learning round, the server updates $\vw_t$ based on the observed loss of $f_k(\cdot)$ which will be specified later. Let $\sN_{k,t}^\text{out}$, $\forall k \in [K]$ be the out-neighbor set of $v_k$. In order to construct the set $\sN_{k,t}^\text{out}$, $\forall k \in [K]$, the server appends nodes $v_j$ to $\sN_{k,t}^\text{out}$ based on both weights and costs of models such that the cumulative cost of nodes in $\sN_{k,t}^\text{out}$ does not exceed the budget $B_t$. At first, the server append $v_k$ to $\sN_{k,t}^\text{out}$ which means that there is a self loop for each $v_k \in \gV$. Let 
\begin{align}
    \sM_{k,t} := \{ v_i &|\forall i: \sum_{j \in \sN_{k,t}^\text{out}}{c_j} + c_i \le B_t, \nonumber \\ & \sum_{j \in \sN_{k,t}^\text{out}}{w_j} + w_i \le \!\!\sum_{j \in \sN_{k,t-1}^\text{out}}{\!\!w_j}, v_i \notin \sN_{k,t}^\text{out} \} \label{eq:6}
\end{align}
denote a set of vertices associated with $v_k$ at learning round $t$. At learning round $t$, find
\begin{align}
    v_d = \arg\max_{v_i \in \sM_{k,t}}{\frac{w_{i,t}}{\sum_{v_j \in \sN_{k,t}^\text{out}}{c_j} + c_i}}. \label{eq:2}
\end{align}
the set $\sM_{k,t}$ then is updated by appending $v_d$ to $\sN_{k,t}^\text{out}$. This procedure continues until $\sM_{k,t}$ becomes an empty set, i.e., $|\sM_{k,t}| = 0$, where $|\cdot|$ represents the cardinality of a set. This means there is no more node that can be appended to $\sN_{k,t}^\text{out}$ such that the constraints in \eqref{eq:6} are satisfied. Moreover, according to \eqref{eq:2}, the server appends $v_d$ to $\sN_{k,t}^\text{out}$ by considering the trade-off between the performance of nodes in prior rounds and the amount of cost that they might add to current cumulative cost of out-neighbors of $v_k$. When the server constructs $\sN_{k,t}^\text{out}$, $\forall k \in [K]$, the set of edges $\mathcal{E}_t$ can be constructed. Specifically, $(k,j) \in \mathcal{E}_t$ if $v_j \in \sN_{k,t}^\text{out}$.  The procedure to construct the graph $\gG_t$ is summarized in \Algref{alg:1}. At each learning round, the server draws one node in $\gG_t$ and transmits models which are out-neighbors of the chosen one. Then the learning task is carried out with a subset of models which are considered as nodes in $\gG_t$. Thus, output of the selected  models can be viewed as feedback collected from $\gG_t$, which is henceforth named as \emph{feedback graph}.
\begin{algorithm}[tb]
	\caption{Feedback Graph Generation}
	\label{alg:1}
	\begin{algorithmic}
		\STATE {\bfseries Input:}{Models $f_k(.)$, weights $w_{k,t}$, costs $c_k$, $\forall k \in [K]$ and the budget $B_t$. }
		\FOR{$k=1,...,K$}
		\STATE Append $v_k$ to $\sN_{k,t}^\text{out}$.
		\WHILE{$|\sM_{k,t}|>0$}
		\STATE Node $v_k$ appends $v_d$ as in \eqref{eq:2} to $\sN_{k,t}^\text{out}$.
		\STATE Update $\sM_{k,t}$ with respect to updated $\sN_{k,t}^\text{out}$.
		\ENDWHILE
		\ENDFOR
		\STATE {\bfseries Output:}{Feedback Graph $\gG_t=(\gV,\gE_t)$.}
	\end{algorithmic}
\end{algorithm}

\subsection{Ensemble Federated Learning}

At each learning round $t$, the server selects one node in $\gG_t$ and constructs the ensemble model with out-neighbors of the chosen node. To this end, the server assigns weight $u_{k,t}$ to node $v_k$ which indicates the confidence in the accuracy of the obtained ensemble model when node $v_k$ is chosen. Then, the server draws one of the nodes $v_k \in \gV$ according to the probability mass function (PMF) $\vp_t$ as follows
\begin{align}
    p_{k,t} = (1-\xi)\frac{u_{k,t}}{U_t} + \frac{\xi}{|\sD_t|}\gI(v_k \in \sD_t) \label{eq:3}
\end{align}
where $\gI(\cdot)$ denotes the indicator function, $\xi$ is the exploration rate and $U_t := \sum_{k=1}^{K}{u_{k,t}}$. The set $\sD_t$ denotes a dominating set for the feedback graph $\gG_t$. The PMF in \eqref{eq:3} constitutes a trade-off between exploitation and exploration. Let $I_t$ denote the index of the drawn node at learning round $t$. Let $\sS_t$ be a set of indices of nodes which are out-neighbors of the chosen node $v_{I_t}$. In this case, the server utilizes the weighting vector $\vw_t$ to construct the ensemble model using models whose indices are in $\sS_t$ as follows
\begin{align}
    \hat{f}_t(\vx) = \sum_{k \in \sS_t}{\frac{w_{k,t}}{W_t}f_k(\vx)} \label{eq:4}
\end{align}
where $W_t := \sum_{k \in \sS_t}{w_{k,t}}$. Then, the server transmits the ensemble model along with the chosen subset of pre-trained models to a subset of $N_t$ clients, which is chosen uniformly at random. Upon receiving new datum the chosen subset of clients perform the learning task using the ensemble model sent by the server. Then, the $i$-th client where $i \in \sC_t$, incurs loss $\gL(\hat{f}(\vx_{i,t}),y_{i,t})$ associated with the received data sample $(\vx_{i,t},y_{i,t})$. Furthermore, the $i$-th client ($\forall i \in \sC_t$) computes the loss $\gL(f_k(\vx_{i,t}),y_{i,t})$, $\forall k \in \sS_t$. Then each client in $\sC_t$ transmits the losses associated with the ensemble model and the chosen subset of models to the server. Upon receiving the losses, the server updates  $w_{k,t}$ and $u_{k,t}$, $\forall k \in [K]$. To this end, the server employs the importance sampling loss estimate which results in an unbiased estimation of the incurred loss. The importance sampling loss estimate for the $k$-th model can be expressed as
\begin{align}
    \ell_{k,t} = \frac{\sum_{i \in \sC_t}{\gL(f_k(\vx_{i,t}),y_{i,t})}}{q_{k,t}}\gI(k \in \sS_t) \label{eq:5}
\end{align}
where 
\begin{align}
    q_{k,t} := \sum_{v_j \in \sN_{k,t}^\text{in}}{p_{j,t}} \label{eq:7}
\end{align}
represents the probability of $k \in \sS_t$, where $\sN_{k,t}^\text{in}$ denotes the in-neighbor set of $v_k$ in $\gG_t$.
In addition, define the importance sampling loss estimate associated with incurred loss of ensemble model when $v_k$ is drawn by the server as
\begin{align}
    \hat{\ell}_{k,t} = \frac{\sum_{i \in \sC_t}{\gL(\hat{f}(\vx_{i,t}),y_{i,t})}}{p_{k,t}}\gI(k=I_t). \label{eq:8}
\end{align}
Using the importance sampling loss estimates in \eqref{eq:5} and \eqref{eq:8}, the weights $w_{k,t}$ and $u_{k,t}$ can the be updated as follows
\begin{subequations} \label{eq:9}
    \begin{align}
        w_{k,t+1} = w_{k,t}\exp(-\eta \ell_{k,t}) \label{eq:9a} \\
        u_{k,t+1} = u_{k,t}\exp(-\eta \hat{\ell}_{k,t}) \label{eq:9b}
    \end{align}
\end{subequations}
where $\eta$ is the learning rate. The procedure that the server sends subset of models to clients is summarized in \Algref{alg:2}. The algorithm is called EFL-FG which stands for Ensemble Federated Learning with Feedback Graph. In each learning round, clients only need to send their computed losses to the server while they do not have to reveal the loss function $\gL(\cdot,\cdot)$ and observed data samples to the server. Specifically, in some applications data samples $(\vx_{i,t},y_{i,t})$ may include some information about clients that they do not wish to share with the server.
\begin{algorithm}[tb]
	\caption{EFL-FG: Ensemble Federated Learning with Feedback Graph}
	\label{alg:2}
	\begin{algorithmic}
		\STATE {\bfseries Input:}{Models $f_k(\cdot)$, weights $w_{k,t}$, costs $c_k$, $\forall k \in [K]$. }
		\STATE \textbf{Initialize:} $w_{k,1} = 1$, $u_{k,1} = 1$, $\forall k \in [K]$. \noindent
		\FOR{$t=1,...,T$}
		\STATE The server generates  $\gG_t$ using \Algref{alg:1}.
		\STATE The server draws one node in $\gG_t$ according to the $\vp_t$ in \eqref{eq:3}, with out-neighbors indexed by $\sS_t$.
		\STATE The server randomly selects a subset of clients $\sC_t$.
		\STATE The server sends models in $\sS_t$ and the ensemble model to the clients in $\sC_t$.
		\STATE Clients in $\sC_t$ compute and send back to the sever the losses  $\gL(\hat{f}(\vx_{i,t}),y_{i,t})$ and  $\gL(f_k(\vx_{i,t}),y_{i,t})$, $\forall k \in \sS_t$.
		\STATE The server computes the importance sampling loss estimates $\ell_{k,t}$ and $\hat{\ell}_{k,t}$, $\forall k \in [K]$.
		\STATE The server updates $w_{k,t+1}$ and $u_{k,t+1}$, $\forall k$ as in \eqref{eq:9}.
		\ENDFOR
	\end{algorithmic}
\end{algorithm}

The present paper considers the case where the number of selected clients may vary with learning rounds. According to \Algref{alg:2}, clients in $\sC_t$ need to send computed losses of sent models along with the losses of ensemble model to the server. Therefore, according to available bandwidth for clients-to-server communication and required bandwidth for sending the updates, the server determines the number of selected clients $N_t$ such that clients can send computed losses without interfering each other. Specifically, the available bandwidth should be divided between clients in $\sC_t$ without overlap to prevent interference. Let $b_t$ be the available bandwidth for clients-to-server communication at learning round $t$. In this case, the server can obtain the number of clients as $N_t \le \left \lfloor{\frac{b_t}{b_\ell ({|\sN_{I_t,t}^\text{out}|}+1)}} \right \rfloor$, where $b_\ell$ is the required bandwidth for transmission of each loss.

\textbf{Comparison with online learning.} Online learning studies problems where a learner interacts with a set of experts such that at each learning round the learner makes decision based on advice received from the experts \cite{Cesa-Bianchi2006,Auer2003}. The learner may observe the loss associated with a subset of experts after decision making, which can be modeled using a feedback graph \cite{Mannor2011,Cortes2020}. In EFL-FG, each pre-trained model is also viewed as an expert. However, there is a major innovative difference compared with online learning with feedback graph: the proposed EFL-FG constructs and refines the feedback graph to improve the performance while in online learning, the feedback graph is generated in an adversarial manner.


\subsection{Regret Analysis}
The present subsection studies the performance of EFL-FG in terms of cumulative regret. The difference between the loss incurred by the ensemble model and the loss of the best pre-trained model in the hindsight is defined as the regret of the ensemble model. In this context, the best pre-trained model in the hindsight is the one with minimum cumulative loss among all pre-trained models. Therefore, the cumulative regret of EFL-FG can be written as
\begin{align}
    \gR_T = & \sum_{t=1}^{T}{\sum_{i \in \sC_t}{\E_t[\gL(\hat{f}(\vx_{i,t}),y_{i,t})]}} \nonumber \\ & - \min_{k \in [K]}{\sum_{t=1}^{T}{\sum_{i \in \sC_t}{\gL(f_k(\vx_{i,t}),y_{i,t})}}} \label{eq:10}
\end{align}
where $\E_t[.]$ denotes the conditional expectation given observed losses in prior learning rounds. In order to analyze the performance of EFL-FG, we assume that the following conditions hold:\\
\textbf{(a1)} The loss function $\gL(f_k(\vx_{i,t}),y_{i,t})$ is convex with respect to $f_k(\vx_{i,t})$.\\ 
\textbf{(a2)} For each $(\vx_{i,t},y_{i,t})$, the loss is bounded $0 \le \gL(f_k(\vx_{i,t}),y_{i,t}) \le 1$.\\ 
\textbf{(a3)} The budget satisfies $B_t \ge c_k$, $\forall k \in [K]$, $\forall t$.\\
The following Theorem presents the regret bound for EFL-FG with respect to the best pre-trained model in hindsight.
\begin{theorem} \label{th:1}
 Under (a1)--(a3), the expected cumulative regret of EFL-FG is bounded by
\begin{align}
    & \sum_{t=1}^{T}{\sum_{i \in \sC_t}{\E_t[\gL(\hat{f}(\vx_{i,t}),y_{i,t})]}} - {\sum_{t=1}^{T}{\sum_{i \in \sC_t}{\gL(f_{k^*}(\vx_{i,t}),y_{i,t})}}} \nonumber \\ \le & \frac{\ln (K|\sN_{k^*,1}^\text{out}|)}{\eta} \nonumber \\ & + \sum_{t=1}^{T}{\left(\xi(1-\frac{\eta}{2}|\sC_t|^2) + \frac{\eta}{2}(K+\frac{1}{\bar{q}_{k^*,t}})|\sC_t|^2\right)} \label{eq:11}
\end{align}
where $k^* = \arg\min_{k \in [K]}{\sum_{t=1}^{T}{\sum_{i \in \sC_t}{\gL(f_k(\vx_{i,t}),y_{i,t})}}}$ is the index of the best pre-trained model in the hindsight, and $\frac{1}{\bar{q}_{k^*,t}} := \sum_{j \in \sN_{k^*,t}^\text{out}}\frac{w_{j,t}}{q_{j,t}W_{k^*,t}}$.
\end{theorem}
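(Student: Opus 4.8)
The plan is to read EFL-FG as a two-level exponential-weights (EXP3-type) scheme and to run the standard potential-function argument at each level. First I would put the left-hand side of \eqref{eq:11} into a form a telescoping argument can attack. Writing $\lambda_{k,t} := \sum_{i\in\sC_t}\gL(f_k(\vx_{i,t}),y_{i,t})$ for the summed loss of model $k$ and $L_{k,t} := \sum_{i\in\sC_t}\gL(\hat f^{(k)}(\vx_{i,t}),y_{i,t})$ for the summed loss of the ensemble that would be built from the out-neighbors of $v_k$, the first regret term equals $\sum_{t=1}^{T}\sum_{k}p_{k,t}L_{k,t}$, because conditioned on the past $I_t\sim\vp_t$. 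I would then check that the importance-sampling estimates are conditionally unbiased: $\E_t[\ell_{k,t}]=\lambda_{k,t}$ since $\Pr[k\in\sS_t]=q_{k,t}=\sum_{v_j\in\sN_{k,t}^\text{in}}p_{j,t}$ (an in-neighbor of $v_k$ must be drawn), and $\E_t[\hat\ell_{k,t}]=L_{k,t}$ since $\Pr[I_t=k]=p_{k,t}$. Finally, convexity (a1) applied to \eqref{eq:4} yields the pointwise bound $L_{k,t}\le\sum_{j\in\sN_{k,t}^\text{out}}(w_{j,t}/W_{k,t})\lambda_{j,t}$, letting every ensemble loss be replaced by a convex combination of individual model losses.

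A natural decomposition then routes the regret through the comparator ``always draw node $v_{k^*}$'': $\gR_T=\big[\sum_t\sum_k p_{k,t}L_{k,t}-\sum_t L_{k^*,t}\big]+\big[\sum_t L_{k^*,t}-\sum_t\lambda_{k^*,t}\big]$. For the first bracket I would run the exponential-weights analysis on the node weights $u_{k,t}$: use the potential $\ln U_t$, apply $e^{-x}\le 1-x+\tfrac{x^2}{2}$ and $\ln(1+x)\le x$ to the update \eqref{eq:9b}, telescope over $t$, and lower-bound $\ln U_{T+1}\ge\ln u_{k^*,T+1}$. Taking $\E_t$ and invoking unbiasedness converts this into a bound on $\sum_t\sum_k p_{k,t}L_{k,t}-\sum_t L_{k^*,t}$; the initialization $U_1=K$ contributes $\tfrac{\ln K}{\eta}$, the $\xi$-mixing with the dominating-set distribution in \eqref{eq:3} contributes the exploration term, and the second-order term $\tfrac{\eta}{2}\sum_k(u_{k,t}/U_t)\E_t[\hat\ell_{k,t}^2]$, controlled through $L_{k,t}\le|\sC_t|$ (from (a2)) and $p_{k,t}\ge(1-\xi)u_{k,t}/U_t$, contributes (after absorbing the $(1-\xi)$ factor into the exploration term) the $\tfrac{\eta}{2}K|\sC_t|^2$ piece. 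For the second bracket I would first apply the convexity bound above and then run the same potential argument on the model weights $w_{j,t}$ restricted to $\sN_{k^*,t}^\text{out}$, using the potential $\ln W_{k^*,t}$ and the update \eqref{eq:9a}. The initialization $W_{k^*,1}=|\sN_{k^*,1}^\text{out}|$ contributes $\tfrac{\ln|\sN_{k^*,1}^\text{out}|}{\eta}$, while the second-order term $\tfrac{\eta}{2}\sum_{j\in\sN_{k^*,t}^\text{out}}(w_{j,t}/W_{k^*,t})\E_t[\ell_{j,t}^2]$, with $\E_t[\ell_{j,t}^2]=\lambda_{j,t}^2/q_{j,t}\le|\sC_t|^2/q_{j,t}$, produces exactly $\tfrac{\eta}{2}\tfrac{1}{\bar q_{k^*,t}}|\sC_t|^2$ by the definition of $\bar q_{k^*,t}$. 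Summing the two initialization terms gives $\tfrac{\ln(K|\sN_{k^*,1}^\text{out}|)}{\eta}$.

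I expect the main obstacle to lie in the feedback-graph and two-weight bookkeeping of the second bracket rather than in any single inequality. Three points need care. First, the neighborhood $\sN_{k^*,t}^\text{out}$ is time-varying and built greedily by \Algref{alg:1}, so $\ln W_{k^*,t}$ is not automatically a telescoping object; here I would invoke the weight-monotonicity constraint in \eqref{eq:6}, which enforces $W_{k^*,t}\le\sum_{j\in\sN_{k^*,t-1}^\text{out}}w_{j,t}$ and thereby chains the per-round potential drops across the changing index set, and I would use the self-loop ($k^*\in\sN_{k^*,t}^\text{out}$ always) to lower-bound $\ln W_{k^*,T+1}\ge\ln w_{k^*,T+1}$ so that $\lambda_{k^*,t}$ is a legitimate comparator. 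Second, the variance factor $\sum_{j}(w_{j,t}/W_{k^*,t})/q_{j,t}$ blows up whenever some observation probability $q_{j,t}$ is small, and it is precisely the exploration mass $\xi/|\sD_t|$ placed on the dominating set $\sD_t$ that lower-bounds $q_{j,t}$ and keeps $1/\bar q_{k^*,t}$ finite. Third, since each loss is summed over the client set, one must verify that $\E_t[\ell_{j,t}^2]$ factorizes as $\lambda_{j,t}^2/q_{j,t}$, which is what produces the $|\sC_t|^2$ (rather than $|\sC_t|$) factors throughout. Once these are settled, adding the two bracketed bounds and summing the per-round contributions yields \eqref{eq:11}.
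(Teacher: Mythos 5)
Your proposal matches the paper's proof essentially step for step: the decomposition through the comparator "always draw $v_{k^*}$" is exactly the paper's split into Lemma~2 (exponential weights on the node weights $u_{k,t}$, giving the $\frac{\ln K}{\eta}$, exploration, and $\frac{\eta}{2}K|\sC_t|^2$ terms) and Lemma~1 (exponential weights on the model weights $w_{j,t}$ over $\sN_{k^*,t}^\text{out}$, chained across rounds via the monotonicity constraint in \eqref{eq:6} and the self-loop, giving the $\frac{\ln|\sN_{k^*,1}^\text{out}|}{\eta}$ and $\frac{\eta}{2}\frac{1}{\bar q_{k^*,t}}|\sC_t|^2$ terms), with the same importance-sampling moment computations and the same Jensen/convexity step. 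The argument is correct and takes the same route as the paper.
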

\begin{proof}
see Appendix.
\end{proof}
According to \eqref{eq:3}, $p_{k,t} > \frac{\xi}{|\sD_t|}$, $\forall k \in \sD_t$. In addition, each node $v_k \in \gV$ is in-neighbor to at least one vertex in the dominating set $\sD_t$. Therefore, based on \eqref{eq:7}, it can be concluded that $q_{k,t} > \frac{\xi}{|\sD_t|}$, $\forall k \in [K]$ and as a result, we have $\frac{1}{\bar{q}_{k^*,t}} < \frac{|\sD_t|}{\xi}$. If the greedy set cover algorithm (see e.g. \cite{Chvatal1979}) is employed to find a dominating set for the feedback graph $\gG_t$, EFL-FG obtains a dominating set with $|\sD_t| = \gO(\alpha(\gG_t)\ln K)$ where $\alpha(\gG_t)$ denotes the independence number of the feedback graph $\gG_t$ \cite{Alon2017}. In this case, if the server sets $\eta = \gO(\sqrt{\frac{\ln K}{T}})$, $\xi = \gO(\frac{(\ln K)^{\frac{3}{4}}}{T^{\frac{1}{4}}})$ and $|\sC_t| = \gO(1)$ from \eqref{eq:11} it can be verified that EFL-FG obtains sub-linear regret of $\gO(\sum_{t=1}^{T}{(\ln K)^{\frac{3}{4}}\alpha(\gG_t)T^{-\frac{1}{4}}})$. It is useful to point out that $\alpha(\gG_t)$ depends on the budget $B_t$. Increase in $B_t$ can result in more connected feedback graph $\gG_t$ and as a result $\alpha(\gG_t)$ decreases. Therefore, larger budget can assist EFL-FG to achieve tighter sub-linear regret bound. For example, when the budget is large enough such that at each learning round $t$, the feedback graph $\gG_t$ is a densely connected graph as $\alpha(\gG_t)=\gO(1)$. In this case, the EFL-FG can achieve regret of $\gO((\ln K)^{\frac{3}{4}}T^{\frac{3}{4}})$. By contrast, when the feedback graph $\gG_t$ only includes self-loops, $\alpha(\gG_t)=K$ and as a result the EFL-FG can achieve regret of $\gO((\ln K)^{\frac{3}{4}}KT^{\frac{3}{4}})$.


\section{Experiments}
We tested the performance of different ensemble federated learning methods: our proposed EFL-FG and FedBoost \cite{Hamer2020} over the following real data sets downloaded from UCI machine learning repository \cite{Dua2017}:\\
\textbf{Bias Correction}: This dataset includes $7,750$ samples of air temperature information with $21$ features such as maximum or minimum air temperatures in the day. The goal is to predict the next-day minimum air temperature \cite{Cho2020}.  \\ 
\textbf{CCPP}: The dataset contains $9,568$ samples, with $4$ features including temperature, pressure, etc, collected from a combined cycle power plant. The goal is to predict hourly electrical energy output \cite{Tufekci2014}.\\
 \textbf{Energy}:  This dataset contains $19,735$ samples of $27$ features of house temperature and humidity conditions were monitored with a wireless sensor network. The goal is to predict the energy use of appliances. \cite{Candanedo2017}.

Consider the case where there are $100$ clients performing a regression task. 
The server stores $22$ pre-trained models, including kernel based regression models with $5$ Gaussian kernels, $5$ Laplacian kernels, $5$ polynomial kernels, $5$ sigmoid kernels and $2$ feed-forward neural networks. The bandwidth of Gaussian, Laplacian kernels, and the slope of sigmoid kernels are $0.01, 0.1, 1, 10, 100$. Also, the degree of polynomial kernels are $1, 2, 3, 4, 5$. The feedforward neural networks have $1$ and $2$ hidden layers respectively, where each hidden layer consists of $25$ neurons with ReLU activation functions. Each model is trained with $10\%$ of each dataset. Furthermore, the cost of sending each model is considered to be the number of parameters associated with model divided by the number of parameters associated with the model with maximum number of parameters. Hence, the maximum cost of sending a model is $1$. The budget  for sending models is $B=3$. 
The learning rate $\eta$ and exploration rate $\xi$ for all methods are set to be $\frac{1}{\sqrt{T}}$. The performance of ensemble federated learning methods is evaluated based on mean square error (MSE) at learning round $t$ defined as
$
    \text{MSE}_t = \frac{1}{t}\sum_{\tau=1}^{t}{\frac{1}{|\sC_\tau|}\sum_{i \in \sC_\tau}{(\hat{y}_{i,\tau}-y_{i,\tau})^2}} 
$
where $\hat{y}_{i,t}$ denote the prediction made by the $i$-th client at learning round $t$. In order to derive dominating sets for feedback graphs in \Algref{alg:2}, greedy set cover algorithm is employed \cite{Chvatal1979}. Note that in this experiments it is assumed that clients are not able to store their observed data in batch. Therefore, FedBoost implementation is modified to  cope with this situation for fair comparison. Specifically, at learning round $t$, the $i$-th client transmits its update with respect to newly observed sample $\vx_{i,t}$ instead of the whole batch of data.

\begin{table}[t]
    \setlength{\tabcolsep}{4pt}
	\caption{MSE ($\times 10^{-3}$) performance and percentage of budget violence.}
	\label{table:1}
	\vskip 0.05in
	\begin{center}
		\begin{small}
			\begin{tabular}{l||lll|lll}
				\toprule
				    &\multicolumn{3}{c}{MSE($\times 10^{-3}$)}  &\multicolumn{3}{c}{budget violence ($\%$)} \\
				Algorithms & Bias & CCPP & Energy   & Bias & CCPP & Energy \\
				\midrule
				FedBoost    & $69.11$ & $45.06$ & $10.19$ & $22.86\%$ & $13.79\%$ & $24.16\%$ \\
				\midrule
				EFL-FG  & $4.81$ & $4.92$ & $8.36$ & $0\%$ & $0\%$ & $0\%$\\
				\bottomrule
			\end{tabular}
		\end{small}
	\end{center}
	\vskip -0.1in
\end{table}

\begin{figure}
	\centering
	\vskip 0.2in
	\includegraphics[width=0.8\linewidth]{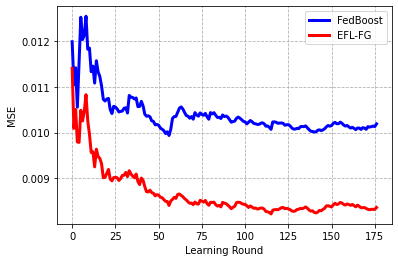}
	\vskip -0.1in
	\caption{MSE performance on Energy dataset.}
	\vskip -0.1in
	\label{fig:1}
\end{figure}

Tables \ref{table:1} shows the MSE and budget violence performance of different algorithms over all datasets, where \emph{budget violence} indicates the proportion of learning rounds when the instantaneous budget constraint is violated. As can be seen from Table \ref{table:1},  the proposed EFL-FG achieves lower MSE compared with FedBoost. Table \ref{table:1} shows that FedBoost violates the instantaneous budget in at least $13\%$ of learning rounds for all datasets, while EFL-FG guaratees that budget constraints is satisfied in every learning round.

Figure \ref{fig:1} illustrates the MSE  versus learning rounds in Energy dataset. It can be observed that our proposed EFL-FG outperforms FedBoost. 
Note that all algorithms obtain different MSE performance starting from the first learning round since different subsets of pre-trained models are chosen and combined by different algorithms.

\section{Conclusion}
The present paper developed a federated learning approach to learn ensemble of pre-trained models when the server cannot transmit all models to clients due to limitations in communication bandwidth and clients' memory. Specifically, the server generates a graph at each learning round and constructs an ensemble model by choosing a subset of pre-trained models using the graph. This paper provided the algorithm EFL-FG which constructs an ensemble model whose size does not surpass a certain budget at each learning round. We proved that EFL-FG achieves sub-linear regret. Experiments on several real datasets reveal the merits of EFL-FG compared with other ensemble federated learning state-of-art.

\bibliographystyle{IEEEtran}
\bibliography{References}

\appendix
\section{Proof of Theorem \ref{th:1}} \label{A}
In order to proof Theorem \ref{th:1}, the following Lemmas are proved and used as step-stone.
\begin{lemma} \label{lem:1}
Let $\hat{f}_k(.)$ represents the ensemble model associated with $v_k \in \gV$. The regret of $\hat{f}_k(.)$ with respect to $f_k(.)$ is bounded above as
\begin{align}
    & \sum_{t=1}^{T}{\sum_{i \in \sC_t}{\gL(\hat{f}_k(\vx_{i,t}),y_{i,t})}} - \sum_{t=1}^{T}{\sum_{i \in \sC_t}{\gL(f_k(\vx_{i,t}),y_{i,t})}} \nonumber \\ \le & \frac{\ln |\sN_{k,1}^\text{out}|}{\eta} + \frac{\eta}{2}\sum_{t=1}^{T}{\sum_{j \in \sN_{k,t}^\text{out}}{\frac{w_{j,t}|\sC_t|^2}{q_{j,t}W_{k,t}}}} \label{eq:1ap}
\end{align}
which holds for all $v_k \in \gV$.
\end{lemma}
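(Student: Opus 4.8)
The plan is to recognize $\hat{f}_k$ as a normalized exponential-weights (Hedge) combination of the ``experts'' $\{f_j : j \in \sN_{k,t}^\text{out}\}$ and to run a potential-function argument on $W_{k,t} = \sum_{j \in \sN_{k,t}^\text{out}} w_{j,t}$, with two ingredients beyond the textbook case: convexity of the loss (to pass from the ensemble to its components) and the monotone-weight constraint in \eqref{eq:6} (to cope with the fact that the expert set $\sN_{k,t}^\text{out}$ changes across rounds). First I would invoke (a1): since $\hat{f}_k(\vx) = \sum_{j \in \sN_{k,t}^\text{out}} (w_{j,t}/W_{k,t}) f_j(\vx)$ is a convex combination, Jensen's inequality gives $\sum_{i \in \sC_t} \gL(\hat{f}_k(\vx_{i,t}),y_{i,t}) \le \sum_{j \in \sN_{k,t}^\text{out}} (w_{j,t}/W_{k,t}) L_{j,t}$, where $L_{j,t} := \sum_{i \in \sC_t} \gL(f_j(\vx_{i,t}),y_{i,t})$. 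This reduces the statement to a weighted-experts regret bound.

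Next I would carry out the standard per-round potential estimate. Using the update \eqref{eq:9a} and the inequalities $e^{-x} \le 1 - x + x^2/2$ (for $x \ge 0$, valid since $\ell_{j,t}\ge 0$) together with $\ln(1+x) \le x$, I would bound $\ln(\tilde{W}_{k,t+1}/W_{k,t})$, where $\tilde{W}_{k,t+1} := \sum_{j \in \sN_{k,t}^\text{out}} w_{j,t+1}$ keeps the round-$t$ expert set but uses the updated weights. This yields $\ln(\tilde{W}_{k,t+1}/W_{k,t}) \le -\eta \sum_{j} (w_{j,t}/W_{k,t}) \ell_{j,t} + (\eta^2/2) \sum_{j}(w_{j,t}/W_{k,t})\ell_{j,t}^2$, with both sums over $\sN_{k,t}^\text{out}$.

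The crux, and the step I expect to be the main obstacle, is telescoping when the expert set varies with $t$. Here I would use the second constraint in \eqref{eq:6}, which forces $\sum_{j \in \sN_{k,t+1}^\text{out}} w_{j,t+1} \le \sum_{j \in \sN_{k,t}^\text{out}} w_{j,t+1}$, i.e. $W_{k,t+1} \le \tilde{W}_{k,t+1}$. Consequently $\ln W_{k,t+1} - \ln W_{k,t} \le \ln \tilde{W}_{k,t+1} - \ln W_{k,t}$, so the per-round bounds sum to a genuine telescope $\ln W_{k,T+1} - \ln W_{k,1}$. Lower-bounding $W_{k,T+1} \ge w_{k,T+1} = \exp(-\eta \sum_t \ell_{k,t})$ (valid because the self-loop keeps $k \in \sN_{k,t}^\text{out}$, and (a3) guarantees $\sN_{k,t}^\text{out}$ is always well defined) and using $W_{k,1} = |\sN_{k,1}^\text{out}|$ since $w_{j,1}=1$, rearrangement gives the deterministic estimate $\sum_t \sum_j (w_{j,t}/W_{k,t})\ell_{j,t} - \sum_t \ell_{k,t} \le \eta^{-1}\ln|\sN_{k,1}^\text{out}| + (\eta/2)\sum_t\sum_j (w_{j,t}/W_{k,t})\ell_{j,t}^2$.

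Finally I would convert the loss estimates into true losses by taking conditional expectations. Unbiasedness of \eqref{eq:5} gives $\E_t[\ell_{j,t}] = L_{j,t}$ (since $\E_t[\gI(j \in \sS_t)] = q_{j,t}$ by \eqref{eq:7}), which combined with the Jensen bound above reproduces the ensemble term on the left and the comparator $\sum_{i}\gL(f_k(\vx_{i,t}),y_{i,t})$; and $\E_t[\ell_{j,t}^2] = L_{j,t}^2/q_{j,t} \le |\sC_t|^2/q_{j,t}$ by (a2), which yields the second-order term $\frac{\eta}{2}\sum_t\sum_{j}\frac{w_{j,t}|\sC_t|^2}{q_{j,t}W_{k,t}}$ on the right. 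The only delicate bookkeeping is that $w_{j,t}$, $W_{k,t}$, and $q_{j,t}$ are measurable with respect to the past, so the expectations must be taken round-by-round (conditionally, treating these as constants) before summing.
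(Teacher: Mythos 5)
Your proposal is correct and follows essentially the same route as the paper's proof: a potential argument on $W_{k,t}$ using $e^{-x}\le 1-x+x^2/2$ and $1+x\le e^x$, the monotonicity $W_{k,t+1}\le\sum_{j\in\sN_{k,t}^\text{out}}w_{j,t+1}$ enforced by \eqref{eq:6} to make the telescope work across the time-varying neighbor sets, the self-loop lower bound $W_{k,T+1}\ge w_{k,T+1}$, conditional expectations of the importance-sampling estimates, and Jensen's inequality via (a1). The only differences are cosmetic (you apply Jensen up front rather than at the end, and you are slightly more explicit about taking expectations round-by-round, which if anything tightens the paper's bookkeeping).
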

\begin{proof}
Let $W_{k,t} = \sum_{j \in \sN_{k,t}^\text{out}}{w_{j,t}}$. According to definition of the set $\sM_{k,t}$ and the \Algref{alg:1}, it can be written that $W_{k,t+1} \le \sum_{j \in \sN_{k,t}^\text{out}}{w_{j,t+1}}$. Thus, for each $v_k \in \gV$ we find
\begin{align}
    \frac{W_{k,t+1}}{W_{k,t}} \le \sum_{j \in \sN_{k,t}^\text{out}}{\frac{w_{j,t+1}}{W_{k,t}}} = \sum_{j \in \sN_{k,t}^\text{out}}{\frac{w_{j,t}}{W_{k,t}}\exp(-\eta \ell_{j,t})}. \label{eq:2ap}
\end{align}
Using the inequality $e^{-x} \le 1-x+\frac{1}{2}x^2, \forall x \ge 0$, from \eqref{eq:2ap}, it can be concluded that
\begin{align}
    \frac{W_{k,t+1}}{W_{k,t}} \le \sum_{j \in \sN_{k,t}^\text{out}}{\frac{w_{j,t}}{W_{k,t}}(1-\eta \ell_{j,t}+\frac{1}{2}\eta^2 \ell_{j,t}^2)}. \label{eq:3ap}
\end{align}
Employing the inequality $1+x\le e^x$ and taking logarithm from both sides of \eqref{eq:3ap}, we obtain
\begin{align}
    \ln \frac{W_{k,t+1}}{W_{k,t}} \le \sum_{j \in \sN_{k,t}^\text{out}}{\frac{w_{j,t}}{W_{k,t}}(-\eta \ell_{j,t}+\frac{1}{2}\eta^2 \ell_{j,t}^2)}. \label{eq:4ap}
\end{align}
Summing \eqref{eq:4ap} over learning rounds leads to
\begin{align}
    \ln \frac{W_{k,T+1}}{W_{k,1}} \le \sum_{t=1}^{T}{\sum_{j \in \sN_{k,t}^\text{out}}{\frac{w_{j,t}}{W_{k,t}}(-\eta \ell_{j,t}+\frac{1}{2}\eta^2 \ell_{j,t}^2)}}. \label{eq:5ap}
\end{align}
In addition, $\ln \frac{W_{k,T+1}}{W_{k,1}}$ can be bounded from below as
\begin{align}
    \ln \frac{W_{k,T+1}}{W_{k,1}} \ge \ln \frac{w_{k,T+1}}{W_{k,1}} = -\eta \sum_{t=1}^{T}{\ell_{k,t}} - \ln |\sN_{k,1}^\text{out}|. \label{eq:6ap}
\end{align}
Combining \eqref{eq:5ap} with \eqref{eq:6ap}, we get
\begin{align}
    & \sum_{t=1}^{T}{\sum_{j \in \sN_{k,t}^\text{out}}{\frac{w_{j,t}}{W_{k,t}} \ell_{j,t}}} - \sum_{t=1}^{T}{\ell_{k,t}} \nonumber \\ \le & \frac{\ln |\sN_{k,1}^\text{out}|}{\eta} + \frac{\eta}{2}\sum_{t=1}^{T}{\sum_{j \in \sN_{k,t}^\text{out}}{\frac{w_{j,t}}{W_{k,t}}\ell_{j,t}^2}}. \label{eq:7ap}
\end{align}
According to \eqref{eq:5} and considering the fact that $\gL(f_k(\vx_{i,t}),y_{i,t}) \le 1$, taking expectation from $\ell_{k,t}$ and $\ell_{k,t}^2$, $\forall v_k \in \gV$, given observed losses in prior learning rounds it can concluded that
\begin{subequations} \label{eq:8ap}
    \begin{align}
        \E_t[\ell_{k,t}] &= \sum_{j \in \sN_{k,t}^\text{in}}{p_{j,t}\frac{\sum_{i \in \sC_t}{\gL(f_k(\vx_{i,t}),y_{i,t})}}{q_{k,t}}} \nonumber \\ &= \sum_{i \in \sC_t}{\gL(f_k(\vx_{i,t}),y_{i,t})} \label{eq:8apa} \\
        \E_t[\ell_{k,t}^2] &= \sum_{j \in \sN_{k,t}^\text{in}}{p_{j,t}\frac{\left(\sum_{i \in \sC_t}{\gL(f_k(\vx_{i,t}),y_{i,t})}\right)^2}{q_{k,t}^2}} \nonumber \\ &= \frac{\left(\sum_{i \in \sC_t}{\gL(f_k(\vx_{i,t}),y_{i,t})}\right)^2}{q_{k,t}} \le \frac{|\sC_t|^2}{q_{k,t}}. \label{eq:8apb}
    \end{align}
\end{subequations}
Taking the expectation from both sides of \eqref{eq:7ap} results in
\begin{align}
    & \sum_{t=1}^{T}{\sum_{i \in \sC_t}{\sum_{j \in \sN_{k,t}^\text{out}}{\frac{w_{j,t}}{W_{k,t}} \gL(f_j(\vx_{i,t}),y_{i,t})}}} - \sum_{t=1}^{T}{\sum_{i \in \sC_t}{\gL(f_k(\vx_{i,t}),y_{i,t})}} \nonumber \\ \le & \frac{\ln |\sN_{k,1}^\text{out}|}{\eta} + \frac{\eta}{2}\sum_{t=1}^{T}{\sum_{j \in \sN_{k,t}^\text{out}}{\frac{w_{j,t}|\sC_t|^2}{q_{j,t}W_{k,t}}}}. \label{eq:9ap}
\end{align}
Based on the Jensen's inequality and the convexity of the loss function $\gL(.,.)$, we can write
\begin{align}
    \sum_{j \in \sN_{k,t}^\text{out}}{\frac{w_{j,t}}{W_{k,t}} \gL(f_j(\vx_{i,t}),y_{i,t})} \ge& \gL(\sum_{j \in \sN_{k,t}^\text{out}}{\frac{w_{j,t}}{W_{k,t}}f_j(\vx_{i,t})},y_{i,t}) \nonumber \\ &= \gL(\hat{f}_k(\vx_{i,t}),y_{i,t}). \label{eq:10ap}
\end{align}
It can then be concluded from \eqref{eq:9ap} and \eqref{eq:10ap} that
\begin{align}
    & \sum_{t=1}^{T}{\sum_{i \in \sC_t}{\gL(\hat{f}_k(\vx_{i,t}),y_{i,t})}} - \sum_{t=1}^{T}{\sum_{i \in \sC_t}{\gL(f_k(\vx_{i,t}),y_{i,t})}} \nonumber \\ \le & \frac{\ln |\sN_{k,1}^\text{out}|}{\eta} + \frac{\eta}{2}\sum_{t=1}^{T}{\sum_{j \in \sN_{k,t}^\text{out}}{\frac{w_{j,t}|\sC_t|^2}{q_{j,t}W_{k,t}}}} \label{eq:11ap}
\end{align}
which proves the Lemma \ref{lem:1}.
\end{proof}
\begin{lemma} \label{lem:2}
The expected regret of EFL-FG with respect to $\hat{f}_{k}(.)$, $\forall k \in [K]$ is bounded as
\begin{align}
    & \sum_{t=1}^{T}{\sum_{i \in \sC_t}{\E_t[\gL(\hat{f}(\vx_{i,t}),y_{i,t})]}} - \sum_{t=1}^{T}{\sum_{i \in \sC_t}{\gL(\hat{f}_k(\vx_{i,t}),y_{i,t})}} \nonumber \\ \le & \frac{\ln K}{\eta} + \sum_{t=1}^{T}{\left(\xi(1-\frac{\eta}{2}|\sC_t|^2) + \frac{\eta}{2}K|\sC_t|^2\right)} \label{eq:12ap}
\end{align}
\end{lemma}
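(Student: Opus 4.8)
The plan is to prove Lemma~\ref{lem:2} by an exponential-weights (EXP3-type) potential argument on the node-level weights $u_{k,t}$, in close analogy with the proof of Lemma~\ref{lem:1} for the model weights $w_{k,t}$. Whereas Lemma~\ref{lem:1} controls the full-information ``expert'' layer, Lemma~\ref{lem:2} controls the randomized node-selection layer, in which the server observes only the loss of the single drawn node $v_{I_t}$ through the importance-sampling estimate $\hat\ell_{k,t}$ in \eqref{eq:8}; the uniform exploration over the dominating set $\sD_t$ in \eqref{eq:3} is what makes this layer tractable. First I would define the potential $U_t = \sum_{k=1}^{K} u_{k,t}$ and study the ratio $U_{t+1}/U_t$. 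Substituting the update \eqref{eq:9b} gives $\frac{U_{t+1}}{U_t} = \sum_{k=1}^{K} \frac{u_{k,t}}{U_t}\exp(-\eta\hat\ell_{k,t})$, and since $\hat\ell_{k,t}\ge 0$ I would apply $e^{-x}\le 1-x+\tfrac12 x^2$ followed by $\ln(1+x)\le x$, exactly as in \eqref{eq:2ap}--\eqref{eq:4ap}.

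Next I would telescope over $t=1,\dots,T$ to obtain an upper bound on $\ln(U_{T+1}/U_1)$, and lower-bound the same quantity by retaining only the term $u_{k,T+1}$, namely $\ln\frac{U_{T+1}}{U_1}\ge \ln\frac{u_{k,T+1}}{U_1} = -\eta\sum_{t=1}^{T}\hat\ell_{k,t}-\ln K$, using $U_1 = K$ from the initialization $u_{k,1}=1$. Combining the two bounds yields
\begin{align}
 \sum_{t=1}^{T}\sum_{j=1}^{K}\frac{u_{j,t}}{U_t}\hat\ell_{j,t} - \sum_{t=1}^{T}\hat\ell_{k,t} \le \frac{\ln K}{\eta} + \frac{\eta}{2}\sum_{t=1}^{T}\sum_{j=1}^{K}\frac{u_{j,t}}{U_t}\hat\ell_{j,t}^2. \nonumber
\end{align}
I would then take the conditional expectation $\E_t[\cdot]$. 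The two computations I need mirror \eqref{eq:8ap}: since $I_t=k$ with probability $p_{k,t}$, one gets $\E_t[\hat\ell_{k,t}] = \sum_{i\in\sC_t}\gL(\hat f_k(\vx_{i,t}),y_{i,t})$, so the subtracted term becomes exactly the comparator loss against $\hat f_k$; and $\E_t[\hat\ell_{k,t}^2] = \frac{(\sum_{i\in\sC_t}\gL(\hat f_k(\vx_{i,t}),y_{i,t}))^2}{p_{k,t}} \le \frac{|\sC_t|^2}{p_{k,t}}$ by (a2).

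The main obstacle is the final bookkeeping that ties the $u$-weighted sum back to the algorithm's actual expected loss and extracts the exploration terms. Here I would use the PMF decomposition \eqref{eq:3} in the form $\frac{u_{j,t}}{U_t} = \frac{1}{1-\xi}\big(p_{j,t}-\frac{\xi}{|\sD_t|}\gI(v_j\in\sD_t)\big)$. Since the numerator is nonnegative and $\hat\ell_{j,t}\ge 0$, this yields $\sum_{j=1}^{K}\frac{u_{j,t}}{U_t}\hat\ell_{j,t} \ge \sum_{j=1}^{K} p_{j,t}\hat\ell_{j,t} - \frac{\xi}{|\sD_t|}\sum_{j\in\sD_t}\hat\ell_{j,t}$, whose expectation identifies $\E_t[\sum_{j} p_{j,t}\hat\ell_{j,t}]$ with the algorithm's expected loss $\sum_{i\in\sC_t}\E_t[\gL(\hat f(\vx_{i,t}),y_{i,t})]$ and controls the remaining dominating-set term using $\gL\le 1$. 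For the second-order term I would use $p_{j,t}\ge(1-\xi)\frac{u_{j,t}}{U_t}$, so that $\sum_{j}\frac{u_{j,t}}{U_t}\E_t[\hat\ell_{j,t}^2]$ is bounded by $K|\sC_t|^2$, producing the $\frac{\eta}{2}K|\sC_t|^2$ contribution. The delicate point is keeping the signs and the exploration factor straight so that the first-order dominating-set correction and the second-order variance combine into the stated per-round term $\xi(1-\frac{\eta}{2}|\sC_t|^2)+\frac{\eta}{2}K|\sC_t|^2$; I expect this to require treating the exploration contribution jointly across the first- and second-order terms, rather than bounding each in isolation.
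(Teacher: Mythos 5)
Your proposal is correct and follows essentially the same route as the paper's proof of Lemma~\ref{lem:2}: an exponential-weights potential argument on $U_t$ with the bounds $e^{-x}\le 1-x+\tfrac12 x^2$ and $1+x\le e^x$, the single-weight lower bound on $\ln(U_{T+1}/U_1)$, the importance-sampling expectations as in \eqref{eq:18ap}, and the decomposition of \eqref{eq:3} to recover the algorithm's expected loss plus the exploration and variance terms. The only (immaterial) difference is that the paper substitutes $u_{k,t}/U_t=\bigl(p_{k,t}-\tfrac{\xi}{|\sD_t|}\gI(k\in\sD_t)\bigr)/(1-\xi)$ already at \eqref{eq:14ap}, before taking logarithms, whereas you defer it until after taking conditional expectations; the final sign/exploration bookkeeping you flag as delicate is handled in the paper exactly by treating the first- and second-order exploration contributions jointly, as you anticipated.
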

\begin{proof}
Recall $U_t = \sum_{k=1}^{K}{u_{k,t}}$. Based on the inequality $e^{-x} \le 1-x+\frac{1}{2}x^2, \forall x \ge 0$ we can write
\begin{align}
    \frac{U_{t+1}}{U_t} &= \sum_{k=1}^{K}{\frac{u_{k,t+1}}{U_t}} = \sum_{k=1}^{K}{\frac{u_{k,t}}{U_t}\exp(-\eta \hat{\ell}_{k,t})} \nonumber \\ & \le \sum_{k=1}^{K}{\frac{u_{k,t}}{U_t}(1-\eta \hat{\ell}_{k,t}+\frac{\eta^2}{2}\hat{\ell}_{k,t}^2 )}. \label{eq:13ap}
\end{align}
According to \eqref{eq:3}, the inequality in \eqref{eq:13ap} can be rewritten as
\begin{align}
    \frac{U_{t+1}}{U_t} \le \sum_{k=1}^{K}{\frac{p_{k,t}-\frac{\xi}{|\sD_t|}\gI(k \in \sD_t)}{1-\xi}(1-\eta \hat{\ell}_{k,t}+\frac{\eta^2}{2}\hat{\ell}_{k,t}^2 )} \label{eq:14ap}
\end{align}
Using the inequality $1+x \le e^x$, taking the logarithm from both sides of \eqref{eq:14ap} and summing the result over learning rounds, it can be inferred that
\begin{align}
    & \ln \frac{U_{T+1}}{U_1} \nonumber \\ \le & \sum_{t=1}^{T}{\sum_{k=1}^{K}{\frac{p_{k,t}-\frac{\xi}{|\sD_t|}\gI(k \in \sD_t)}{1-\xi}(-\eta \hat{\ell}_{k,t}+\frac{\eta^2}{2}\hat{\ell}_{k,t}^2 )}}. \label{eq:15ap}
\end{align}
Furthermore, $\ln \frac{U_{T+1}}{U_1}$ can be bounded from below as follows
\begin{align}
    \ln \frac{U_{T+1}}{U_1} \ge \ln \frac{u_{k,T+1}}{U_1} = -\eta \sum_{t=1}^{T}{\hat{\ell}_{k,t}} - \ln K. \label{eq:16ap}
\end{align}
Combining \eqref{eq:15ap} with \eqref{eq:16ap} leads to
\begin{align}
    & \sum_{t=1}^{T}{\sum_{k=1}^{K}p_{k,t}\hat{\ell}_{k,t}} - \sum_{t=1}^{T}{\hat{\ell}_{k,t}} \nonumber \\ \le & \frac{\ln K}{\eta} + \sum_{t=1}^{T}{\sum_{k=1}^{K}{\frac{\xi \gI(k \in \sD_t)}{|\sD_t|}\hat{\ell}_{k,t}}} \nonumber \\ & + \sum_{t=1}^{T}{\sum_{k=1}^{K}{\frac{\eta}{2}\left(p_{k,t}-\frac{\xi}{|\sD_t|}\gI(k \in \sD_t)\right)\hat{\ell}_{k,t}^2}}. \label{eq:17ap}
\end{align}
Expected values of $\hat{\ell}_{k,t}$ and $\hat{\ell}_{k,t}^2$ given prior observed losses can be expressed as
\begin{subequations} \label{eq:18ap}
    \begin{align}
    \E_t[\hat{\ell}_{k,t}] &= p_{k,t}\frac{\sum_{i \in \sC_t}{\gL(\hat{f}_k(\vx_{i,t}),y_{i,t})}}{p_{k,t}} \nonumber \\ &= \sum_{i \in \sC_t}{\gL(\hat{f}_k(\vx_{i,t}),y_{i,t})} \label{eq:18apa} \\
    \E_t[\hat{\ell}_{k,t}^2] &= p_{k,t}\frac{(\sum_{i \in \sC_t}{\gL(\hat{f}_k(\vx_{i,t}),y_{i,t})})^2}{p_{k,t}^2} \nonumber \\ &= \frac{(\sum_{i \in \sC_t}{\gL(\hat{f}_k(\vx_{i,t}),y_{i,t})})^2}{p_{k,t}} \le \frac{|\sC_t|^2}{p_{k,t}}. \label{eq:18apb}
    \end{align}
\end{subequations}
Taking the expectation from both sides of the \eqref{eq:17ap}, we get
\begin{align}
    & \sum_{t=1}^{T}{\sum_{i \in \sC_t}{\sum_{k=1}^{K}{p_{k,t}\gL(\hat{f}_k(\vx_{i,t}),y_{i,t})}}} - \sum_{t=1}^{T}{\sum_{i \in \sC_t}{\gL(\hat{f}_k(\vx_{i,t}),y_{i,t})}} \nonumber \\ \le & \frac{\ln K}{\eta} + \sum_{t=1}^{T}{\sum_{k=1}^{K}{\frac{\xi \gI(k \in \sD_t)}{|\sD_t|}\gL(\hat{f}_k(\vx_{i,t}),y_{i,t})}} \nonumber \\ &+ \sum_{t=1}^{T}{\sum_{k=1}^{K}{\frac{\eta}{2}(p_{k,t}-\frac{\xi}{|\sD_t|}\gI(k \in \sD_t))\frac{|\sC_t|^2}{p_{k,t}}}}. \label{eq:19ap}
\end{align}
Taking into account that $\gL(\hat{f}_k(\vx_{i,t}),y_{i,t}) \le 1$ and $p_{k,t} \le 1$, we can conclude that
\begin{align}
    & \sum_{t=1}^{T}{\sum_{i \in \sC_t}{\sum_{k=1}^{K}{p_{k,t}\gL(\hat{f}_k(\vx_{i,t}),y_{i,t})}}} - \sum_{t=1}^{T}{\sum_{i \in \sC_t}{\gL(\hat{f}_k(\vx_{i,t}),y_{i,t})}} \nonumber \\ \le & \frac{\ln K}{\eta} + \sum_{t=1}^{T}{\xi(1-\frac{\eta}{2}|\sC_t|^2) + \frac{\eta}{2}K|\sC_t|^2}. \label{eq:20ap}
\end{align}
Moreover, according to \Algref{alg:2}, it can be inferred that
\begin{align}
    \E_t[\gL(\hat{f}(\vx_{i,t}),y_{i,t})] = \sum_{k=1}^{K}{p_{k,t}\gL(\hat{f}_k(\vx_{i,t}),y_{i,t})}. \label{eq:21ap}
\end{align}
Thus, we can write
\begin{align}
    & \sum_{t=1}^{T}{\sum_{i \in \sC_t}{\E_t[\gL(\hat{f}(\vx_{i,t}),y_{i,t})]}} - \sum_{t=1}^{T}{\sum_{i \in \sC_t}{\gL(\hat{f}_k(\vx_{i,t}),y_{i,t})}} \nonumber \\ \le & \frac{\ln K}{\eta} + \sum_{t=1}^{T}{\left(\xi(1-\frac{\eta}{2}|\sC_t|^2) + \frac{\eta}{2}K|\sC_t|^2\right)}. \label{eq:22ap}
\end{align}
which proves Lemma \ref{lem:2}.
\end{proof}
Combining Lemma \ref{lem:1} with Lemma \ref{lem:2}, the following inequality holds
\begin{align}
    & \sum_{t=1}^{T}{\sum_{i \in \sC_t}{\E_t[\gL(\hat{f}(\vx_{i,t}),y_{i,t})]}} - \sum_{t=1}^{T}{\sum_{i \in \sC_t}{\gL(f_k(\vx_{i,t}),y_{i,t})}} \nonumber \\ \le & \frac{\ln (K|\sN_{k,1}^\text{out}|)}{\eta} + \frac{\eta}{2}\sum_{t=1}^{T}{\sum_{j \in \sN_{k,t}^\text{out}}{\frac{w_{j,t}|\sC_t|^2}{q_{j,t}W_{k,t}}}} \nonumber \\ &+ \sum_{t=1}^{T}{\left(\xi(1-\frac{\eta}{2}|\sC_t|^2) + \frac{\eta}{2}K|\sC_t|^2\right)} \label{eq:23ap}
\end{align}
which holds for all $v_k \in \gV$. Therefore, we can conclude that
\begin{align}
    & \sum_{t=1}^{T}{\sum_{i \in \sC_t}{\E_t[\gL(\hat{f}(\vx_{i,t}),y_{i,t})]}} - \min_{k \in [K]}\sum_{t=1}^{T}{\sum_{i \in \sC_t}{\gL(f_k(\vx_{i,t}),y_{i,t})}} \nonumber \\ \le & \frac{\ln (K|\sN_{k^*,1}^\text{out}|)}{\eta} + \frac{\eta}{2}\sum_{t=1}^{T}{\sum_{j \in \sN_{k^*,t}^\text{out}}{\frac{w_{j,t}|\sC_t|^2}{q_{j,t}W_{k^*,t}}}} \nonumber \\ &+ \sum_{t=1}^{T}{\left(\xi(1-\frac{\eta}{2}|\sC_t|^2) + \frac{\eta}{2}K|\sC_t|^2\right)} \label{eq:24ap}
\end{align}
which proves the Theorem \ref{th:1}.

\end{document}